\def\x{{\mathbf x}}
\def\a{{\mathbf a}}
\def\u{{\mathbf u}}
\def\z{{\mathbf z}}
\def\v{{\mathbf v}}
\def\I{{\mathbf 1}}
\def\H{{\mathbf H}}
\def\Y{{\mathbf Y}}
\def\L{{\mathbf L}}
\def\D{{\mathbf D}}
\def\E{{\mathbf E}}
\def\Z{{\mathbf Z}}
\def\M{{\mathbf M}}
\def\I{{\mathbf I}}
\def\1{{\mathbf 1}}
\def\S{{\mathbf S}}
\def\X{{\mathbf X}}
\def\Y{{\mathbf Y}}
\def\A{{\mathbf A}}
\def\B{{\mathbf B}}
\def\U{{\mathbf U}}
\def\E{{\mathbf E}}
\def\M{{\mathbf M}}
\def\H{{\mathbf H}}
\def\I{{\mathbf I}}
\def\R{{\mathbb R}}
\def\L{{\mathbf L}}
\def\O{{\mathcal O}}
\newcommand\norm[1]{\left\lVert#1\right\rVert}
\begin{document}
\pagestyle{headings}
\mainmatter

\title{Binary Hashing with Semidefinite Relaxation and Augmented Lagrangian} 

\titlerunning{\textit{Binary Hashing with Semidefinite Relaxation and Augmented Lagrangian}}

\authorrunning{\textit{Thanh-Toan Do, Anh-Dzung Doan, Duc-Thanh Nguyen, Ngai-Man Cheung}}

\author{Thanh-Toan Do, Anh-Dzung Doan, Duc-Thanh Nguyen, Ngai-Man Cheung}


\institute{Singapore University of Technology and Design \\ {\tt\small \{thanhtoan\_do, dung\_doan, ducthanh\_nguyen, ngaiman\_cheung\}@sutd.edu.sg}}

\maketitle

\begin{abstract}
This paper proposes two approaches for inferencing binary codes in two-step (supervised, unsupervised) hashing. 
We first introduce an unified formulation for both supervised and unsupervised hashing.
Then, we cast the learning of one bit as a Binary Quadratic Problem (BQP).  We propose two approaches to solve BQP. In the first approach, we relax BQP as a semidefinite programming problem which its global optimum can be achieved. We theoretically prove that the objective value of the  binary solution achieved by this approach is well bounded. 
In the second approach, we propose  an augmented Lagrangian based approach to solve BQP directly without relaxing the binary constraint. Experimental results on three benchmark datasets show that our proposed methods compare favorably with the state of the art.
\keywords{Two-step hashing, Semidefinite programming, Augmented Lagrangian.}
\end{abstract}

\section{Introduction}
\vspace{-0.2cm}
Hashing methods construct a set of hash functions that map the original high dimensional data into  low dimensional binary data. The resulted binary vectors allow efficient storage and fast searching, making hashing as an attractive approach for large scale visual search~\cite{Grauman_review,DBLP:journals/corr/WangSSJ14}.

Existing hashing methods can be categorized as data-independent and data-dependent schemes.
 Data-independent hashing methods~\cite{lsh_vldb09,KLSH_iccv09,KLSH_nips09,DBLP:journals/pami/KulisJG09} rely on random projections for constructing hash functions. Data-dependent hashing methods use available training data for learning hash functions in unsupervised or supervised way. 
Unsupervised hashing methods, e.g. Spectral Hashing~\cite{DBLP:conf/nips/WeissTF08}, Iterative Quantization (ITQ) \cite{DBLP:conf/cvpr/GongL11}, K-means Hashing \cite{DBLP:conf/cvpr/HeWS13}, Spherical Hashing \cite{CVPR12:SphericalHashing}, Non-negative Matrix Factorization (NMF) hashing~\cite{Mukherjee_2015_ICCV}, try to preserve the distance similarity of samples. 
Supervised hashing methods, e.g. Minimal Loss Hashing \cite{DBLP:conf/icml/NorouziF11}, ITQ-CCA \cite{DBLP:conf/cvpr/GongL11},  Binary Reconstructive Embedding \cite{Kulis_learningto}, KSH~\cite{CVPR12:Hashing}, Two-Step Hashing~\cite{DBLP:conf/iccv/LinSSH13}, FastHash \cite{CVPR2014Lin}, try to preserve the label similarity of samples. 

Most aforementioned hashing methods follow two general steps for computing binary codes. The first step is to define hash functions together with a specific loss function. Usually, the hash functions take the linear form~\cite{DBLP:conf/cvpr/GongL11,DBLP:conf/icml/NorouziF11,lsh_vldb09} or non-linear (e.g. kernel) form~\cite{KLSH_iccv09,Kulis_learningto,CVPR12:Hashing}. The loss functions are typically defined by minimizing the difference between Hamming affinity (or distance) of data pairs and the ground truth~\cite{CVPR12:Hashing,DBLP:conf/iccv/LinSSH13,Kulis_learningto,CVPR2014Lin}. 
The second step is to solve hash function parameters by minimizing the loss function under  the binary constraint on the codes. 
The coupling of the hash function and the binary constraint often results in a highly non-convex optimization which is very challenging to solve. Furthermore, because the hash functions vary for different methods, different optimization techniques are needed for each of them.
\vspace{-0.3cm} 
\subsection{Related work}
Our work is inspired by a few recent supervised hashing methods~\cite{DBLP:conf/iccv/LinSSH13,CVPR2014Lin} 
 and unsupervised hashing method~\cite{Mukherjee_2015_ICCV} which rely on two-step approach to reduce the complexity of the coupled problem and to make the flexibility in using of different types of hash functions. In particular, those works decompose the learning of hash functions under binary constraint into two steps: the binary code inference step and the hash function learning step. The most difficult step is binary code inference which is NP-hard problem. After getting the binary codes, the hash function learning step becomes a classical binary classifier learning. Hence, it allows the using of various types of hash functions, i.e., linear SVM~\cite{Mukherjee_2015_ICCV}, kernel SVM~\cite{DBLP:conf/iccv/LinSSH13}, decision tree~\cite{CVPR2014Lin}.

In order to infer binary codes, in~\cite{DBLP:conf/iccv/LinSSH13,CVPR2014Lin}, the authors form the learning of one bit of binary code as a binary quadratic problem and using non-linear optimization~\cite{DBLP:conf/iccv/LinSSH13} or Graphcut~\cite{CVPR2014Lin} for solving. 
In~\cite{Mukherjee_2015_ICCV}, the authors solve the binary code inference using non-linear optimization approach or non-negative matrix factorization approach. We will brief the approaches in~\cite{DBLP:conf/iccv/LinSSH13,CVPR2014Lin,Mukherjee_2015_ICCV} 
  when comparing to our methods in section~\ref{subsec:diff}. 

Although different methods are proposed for inferencing the binary code, the disadvantage of those methods~\cite{DBLP:conf/iccv/LinSSH13,Mukherjee_2015_ICCV} 
 is that in order to overcome the hardness of the binary constraint on codes, they solve the relaxed problem, i.e., relaxing the binary constraint to continuous constraint. 
This may decrease the code quality and incurs some performance penalty. 
Furthermore, those works have not theoretically investigated the quality of the relaxed solution.

\vspace{-0.3cm} 
\subsection{Contribution}
Instead of considering separate formulations for supervised hashing and unsupervised hashing, we first present an unified formulation for both. 
Our  main contributions are that we propose two approaches for inferencing binary codes. 
In the first approach, we cast the learning of one bit of the binary code as a Semidefinite Programming (SDP) problem which its global optimum can be achieved. After using a randomized rounding procedure for converting the solution of SDP to the binary solution, we theoretically prove that the objective value of the resulted binary solution is well bounded, i.e., it is not arbitrarily far from the global optimum objective value of the original problem. It is worth noting that although semidefinite relaxation has been applied to several computer vision problems such as image segmentation, image restoration~\cite{DBLP:journals/pami/KeuchelSSC03,DBLP:journals/corr/WangSH14a}, to the best of our knowledge, our work is the first one that applies semidefinite relaxation to the binary hashing problem. 
In the second approach, we propose to use Augmented Lagrangian (AL) for directly solving the binary code inference problem without relaxing the binary constraint.   
One important step in the AL is initialization~\cite{Nocedal06}. 
In this work, we careful derive an initialization to achieve a good feasible starting point. For both SDP and AL approaches, their memory and computational complexity are also analyzed.

The remaining of this paper is organized as follows. Section~\ref{sec:proposed} 
presents proposed approaches for binary code inference. Section~\ref{sec:exp} evaluates and compares proposed approaches to the state of the art. Section~\ref{sec:concl} concludes the paper.  

\vspace{-0.4cm} 
\section{Proposed methods}
\label{sec:proposed}

\subsection{Unified formulation for similarity preserving unsupervised / supervised hashing}
Let $\X \in \R^{D \times n}$ be matrix of $n$ samples; $\S = \{s_{ij}\} \in \R^{n\times n}$ be symmetric pairwise similarity matrix, i.e., pairwise distance matrix for unsupervised hashing or pairwise label matrix for supervised hashing; $\Z = \{z_{ij}\} \in \{-1,1\}^{L \times n}$ be binary code matrix of $\X$, where $L$ is code length; each column of $\Z$ is binary code of one sample; $\D = \{d_{ij}\}\in \R^{n\times n}$, 
where $d_{ij}$ is Hamming distance between samples $i$ and $j$, i.e., columns $i$ and $j$ of $\Z$; we have $0\le d_{ij} \le L$. 
We target to learn the binary code $\Z$ such that 
the similarity matrix in original space is directly preserved through Hamming distance in Hamming space. 
In a natural means, we learn the binary code matrix $\Z$ by solving the following binary constrained least-squares objective function
\begin{equation}
\min_{\Z \in \{-1,1\}^{L \times n}} \norm{\frac{1}{L}\D - \frac{1}{c}\S}^{2} 
\label{eq:dis-preserving}
\end{equation}

In (\ref{eq:dis-preserving}), $c$ is a constant. The scale factors $\frac{1}{L}$ and $\frac{1}{c}$ are to make $\D$ and $\S$ same scale, i.e., belonging to the interval $[0,1]$, when doing least-squares fitting. 
For unsupervised hashing, any distance function can be used for computing $\S$. 
In this work, we consider the squared Euclidean distance which is widely used in nearest neighbor search. By assuming that the samples are normalized to have unit $l_2$ norm, we have $0 \le s_{ij} \le 4$. In this case, the constant $c$ equals to 4. For supervised hashing, we define $s_{ij} = 0$ if samples $i$ and $j$ are same class. Otherwise, $s_{ij} = 1$. In this case, the constant $c$ equals to 1.

In~\cite{CVPR12:Hashing}, the authors show that the Hamming distance and code inner product is in one-to-one correspondence. That is
\begin{equation} 
\D = \frac{\L-\Z^T \Z}{2}
\label{eq:inner-hamming}
\end{equation}
where $\L$ is a matrix of all-$L$s. 

Substituting (\ref{eq:inner-hamming}) into (\ref{eq:dis-preserving}), we get the unified formulation for unsupervised and supervised hashing as 
\begin{equation}
\min_{\Z \in \{-1,1\}^{L \times n}} \norm{\Z^T\Z - \Y}^{2}
\label{eq:dis-preserving_unsup1}
\end{equation}
where $\Y = \L- \frac{L\S}{2}$ and $\Y = \L- 2L\S$ for unsupervised and supervised hashing, respectively. Note that since $\S$ is symmetric, $\Y$ is also symmetric.

The optimization problem (\ref{eq:dis-preserving_unsup1}) is non-convex and difficult to solve, i.e. NP-hard, due to the binary constraint. In order to overcome this challenge, we use the coordinate descent approach which learns one bit, i.e. one row of $\Z$,  
at a time, while keeping other rows fixed. Our coordinate descent approach for learning binary codes is shown in Algorithm~\ref{alg0}.

\begin{algorithm}[!t]
	\scriptsize
	\caption{Coordinate descent with Augmented Lagrangian (AL) / Semidefinite Relaxation (SDR)}
	\begin{algorithmic}[1] 
		\Require 
			\Statex Similarity matrix $\S$; training data $\X$; code length $L$; maximum iteration number $max\_iter$.
		\Ensure 
			\Statex Binary code matrix $\Z$.
			\Statex
			\State Initialize the binary code matrix $\Z$.
			\For{$r = 1 \to max\_iter$}
				\For {$k=1 \to L$}
				\State $\x$ $\leftarrow$ solve BQP (\ref{eq:bqp1}) for row $k$ of $\Z$ with SDR (Sec.~\ref{subsec:SDR}) or AL (Sec.~\ref{subsec:AL}).
				\State Update row $k$ of $\Z$ with $\x$. 
				\EndFor
			\EndFor
			\State Return $\Z$
    \end{algorithmic}
    \label{alg0}
\end{algorithm}

When solving for the bit $k$ (i.e. row $k$) of $\Z$, solving (\ref{eq:dis-preserving_unsup1}) is equivalent to solving the following problem 
\begin{eqnarray}
&&\min_{\z^{(k)} \in \{-1,1\}^n}  \sum_{i=1}^n \sum_{j=1}^n 2 {z_i}^{(k)} {z_j}^{(k)} \left( {\bar{\z}_i}^T \bar{\z}_j - y_{ij} \right) + const
\label{eq:bqp0}
\end{eqnarray}
where $\z^{(k)}$ is transposing of row $k$ of $\Z$; $\z_i$ is binary code of sample $i$, i.e., column $i$ of $\Z$; ${z_i}^{(k)}$ is bit $k$ of sample $i$; ${\bar{\z}_i}$ is $\z_i$ excluding bit $k$.

By removing the $const$ and letting $\x = [x_1,...,x_n]^T = {\z^{(k)}}$ (for notational simplicity), (\ref{eq:bqp0}) is equivalent to the following Binary Quadratic Problem (BQP)
\begin{eqnarray}
&&\min_{\x} {\x}^T \A \x \nonumber \\
&& s.t.\  {x_i}^2 = 1,\forall i = 1,...,n. 
\label{eq:bqp1}
\end{eqnarray}
where $\A = \{a_{ij}\} \in \R^{n \times n}$; $a_{ij} = {\bar{\z}_i}^T \bar{\z}_j - y_{ij}$.

Because $\Y$ is symmetric, $\A$ is also symmetric. The constraints in (\ref{eq:bqp1}) come from the fact that $x_i \in \{-1,1\} \Leftrightarrow {x_i}^2 = 1 $. In sections~\ref{subsec:SDR} and~\ref{subsec:AL}, we present our approaches for solving (\ref{eq:bqp1}).

\vspace{-0.3cm}
\subsection{Semidefinite Relaxation (SDR) approach}
\label{subsec:SDR}
Let us start with the following proposition
\begin{proposition}
Let matrix $\B = \A -\lambda_1 \I$, where $\lambda_1$ is the largest eigenvalue of $\A$, then
\begin{itemize}
\item (\ref{eq:bqp1}) is equivalent to 
\begin{eqnarray}
&&\min_{\x} {\x}^T \B \x \nonumber \\
&& s.t.\  {x_i}^2 = 1,\forall i = 1,...,n. 
\label{eq:bqp2} 
\end{eqnarray}
\item $\B$ is negative semidefinite.
\end{itemize}
\end{proposition}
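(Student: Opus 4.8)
The plan is to treat the two claims separately, and the single observation that drives both is that the quadratic constraints pin the norm of $\x$ to a constant on the feasible set: whenever $x_i^2 = 1$ for all $i$, we have $\x^T\x = \sum_{i=1}^n x_i^2 = n$. This is the fact that makes the shift by $\lambda_1 \I$ harmless for the objective, and it is really the only idea needed for the equivalence.

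For the equivalence of (\ref{eq:bqp1}) and (\ref{eq:bqp2}), I would simply expand the shifted objective and substitute the pinned norm:
\begin{equation}
\x^T \B \x = \x^T(\A - \lambda_1 \I)\x = \x^T\A\x - \lambda_1\, \x^T\x = \x^T\A\x - \lambda_1 n .
\end{equation}
Since both problems have the identical feasible set $\{\x : x_i^2 = 1,\ \forall i\}$ and their objectives differ on that set only by the fixed additive constant $\lambda_1 n$ (independent of which feasible $\x$ is chosen), the two problems share exactly the same set of minimizers. Hence they are equivalent.

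For the negative semidefiniteness of $\B$, I would invoke the symmetry of $\A$ (established earlier in the excerpt), which guarantees a real spectrum $\lambda_1 \ge \lambda_2 \ge \cdots \ge \lambda_n$ together with an orthonormal eigenbasis; in particular ``the largest eigenvalue'' $\lambda_1$ is well defined. Subtracting $\lambda_1 \I$ leaves the eigenvectors unchanged and maps each eigenvalue $\lambda_i$ to $\lambda_i - \lambda_1$. Because $\lambda_1$ is the largest, every shifted eigenvalue satisfies $\lambda_i - \lambda_1 \le 0$, so all eigenvalues of $\B$ are nonpositive and $\B \preceq 0$.

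Neither step poses a real obstacle; the proof is essentially a one-line computation plus an eigenvalue shift. The only points I would be careful to state explicitly are that symmetry of $\A$ is what makes the eigenvalue argument valid (it secures a real spectrum and a well-defined maximal eigenvalue), and that the constant $n$ arising from $\x^T\x$ on the feasible set is precisely why the reformulation costs nothing. These two remarks are the substance; everything else is routine.
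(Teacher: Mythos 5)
Your proof is correct, and the first half coincides with the paper's argument verbatim: on the feasible set $\x^T\x = \sum_i x_i^2 = n$, so the two objectives differ by the fixed constant $n\lambda_1$ and the minimizers coincide. For the second half, both you and the paper rely on the spectral theorem for the symmetric matrix $\A$, but the executions differ in a way worth noting. The paper bounds the quadratic form directly: writing $\A = \U\E\U^T$ and $\v = \U^T\x$, it derives $\x^T\B\x = \v^T\E\v - n\lambda_1 \le \lambda_1\v^T\v - n\lambda_1 = 0$, a chain that invokes $\v^T\v = \x^T\x = n$ and therefore, read literally, only establishes nonpositivity of the form on vectors of norm $\sqrt{n}$; full negative semidefiniteness then follows by homogeneity of quadratic forms, a step the paper leaves implicit. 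Your argument --- the shift $\A \mapsto \A - \lambda_1\I$ preserves eigenvectors and sends each eigenvalue $\lambda_i$ to $\lambda_i - \lambda_1 \le 0$ --- establishes $\B \preceq 0$ for all vectors at once, with no reference to the constraint set, and is the cleaner of the two; it also makes explicit that symmetry of $\A$ (real spectrum, well-defined largest eigenvalue) is what licenses the whole argument. Both routes buy the same conclusion; yours simply avoids invoking the constraint $\x^T\x = n$ in the one place where it is not needed.
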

\begin{proof}
\begin{itemize}
\item we have
\begin{eqnarray}
{\x}^T \B \x &=&  {\x}^T \A \x - {\x}^T (\lambda_1 \I) \x \nonumber \\
&=& {\x}^T \A \x - \sum_{i=1}^n \lambda_1 {x_i}^2 \nonumber \\
&=& {\x}^T \A \x - n\lambda_1
\end{eqnarray}
As $n\lambda_1$ is constant, solving (\ref{eq:bqp1}) is equivalent to solving (\ref{eq:bqp2}). $\square$
\item As $\A$ is symmetric, $\B$ is also symmetric. The symmetric matrix $\A$ can be decomposed as $\A = \U\E\U^T$, where $\E$ is diagonal matrix; $diag(\E)$ are eigenvalues of $\A$; columns of $\U$ are eigenvectors of $\A$ and $\U\U^T = \I$. We have  
\begin{eqnarray}
{\x}^T \B \x &=& {\x}^T \U\E\U^T \x - n\lambda_1 \nonumber \\
&=& \v^T \E \v - n\lambda_1 \nonumber \\
&\le & \lambda_1 \v^T\v - n\lambda_1 \nonumber \\
&=& n\lambda_1 - n\lambda_1 \nonumber \\
&=& 0
\end{eqnarray}
where $\v = \U^T\x$. The second last equation comes from the fact that 
$\v^T\v = {\x}^T \U\U^T \x = {\x}^T \x = n$. The last equation means $\B \preceq 0$\footnote{The notations $\preceq 0$ and $\succeq 0$ mean negative semidefinite and positive semidefinite, respectively.}. $\square$
\end{itemize}
\end{proof}
Because (\ref{eq:bqp1}) and (\ref{eq:bqp2}) are equivalent, we solve (\ref{eq:bqp2}), instead of (\ref{eq:bqp1}). The reason is that we will use the negative semidefinite property of $\B$ to derive the bounds on the objective value of solution of the relaxation. 
Note that, because $\B \preceq 0$, the objective function value of (\ref{eq:bqp2}) is non-positive. 

\vspace{-0.3cm}
\subsubsection{Solving}
Solving (\ref{eq:bqp2}) is challenge due to the binary constraint which is NP-hard. In this work, we rely on the semidefinite programming relaxation approach~\cite{Vandenberghe:1996:SP:233104.233107,5447068}. 
By introducing new variable, $\X = \x{\x}^T$, 
(\ref{eq:bqp2}) can be exactly rewritten as
\begin{eqnarray}
&& \min_{\X}\ trace(\B\X) \nonumber \\
&& s.t.\ diag(\X) = \mathbf{1}; \X \succeq 0; rank(\X) = 1
\label{SDR_rank1}
\end{eqnarray}
The objective function and the constraints in (\ref{SDR_rank1}) are convex in $\X$, excepting the rank one constraint. If we drop the rank one constraint, (\ref{SDR_rank1})  becomes a semidefinite program  
\begin{eqnarray}
&& \min_{\X}\ trace(\B\X) \nonumber \\
&& s.t.\ diag(\X) = \mathbf{1}; \X \succeq 0 
\label{SDR_norank}
\end{eqnarray}
We call (\ref{SDR_norank}) as semidefinite relaxation (SDR) of (\ref{eq:bqp2}). The solving of SDR problem (\ref{SDR_norank}) has been well studied. There are several widely used convex optimization packages such as SeDuMi~\cite{paper:sturm:99:sedumi}, SDPT3~\cite{Toh99sdpt3--} which use iterior-point method for solving (\ref{SDR_norank}). Because (\ref{SDR_norank}) is a convex optimization, its global optimal solution can be achieved by using the mentioned packages.

After getting the global optimal solution $\X^*$ of (\ref{SDR_norank}), the only remaining problem is how to convert $\X^*$ to a feasible solution of (\ref{eq:bqp2}). In this work, we follow the randomized rounding method proposed in~\cite{Goemans:1995:IAA:227683.227684}. Given $\X^*$, we generate vector $\mathbf{\xi}$ by  $\mathbf{\xi} \sim \mathcal{N}(0,\X^{*})$ and construct the feasible point $\hat{\x}$ of (\ref{eq:bqp2}) as 
\begin{equation}
\hat{\x} = sgn (\mathbf{\xi})
\end{equation}
This process is done multiple times, and the $\hat{\x}$ point which provides minimum objective value (of (\ref{eq:bqp2})) is selected as the solution of (\ref{eq:bqp2}).

\vspace{-0.3cm}
\subsubsection{Bounding on the objective value of SDR-rounding solution}
Let $f_{opt}$ be global optimum objective value of (\ref{eq:bqp2}) and $f_{SDR-round}$ be objective value at $\hat{\x}$ which is achieved by above rounding procedure, i.e, $f_{SDR-round} = {\hat{\x}}^T \B \hat{\x}$.  
We are interesting in finding how is $f_{SDR-round}$ close to $f_{opt}$.  
In~\cite{Goemans:1995:IAA:227683.227684,nesterov1998semidefinite}, under some conditions on the matrix $\B$, the authors derived bounds on $f_{SDR-round}$ 
 to maximization problem of the form (\ref{eq:bqp2}). In this paper, we derive bounds for the minimization problem (\ref{eq:bqp2}), where $\B \preceq 0$. The bounds on $f_{SDR-round}$ is achieved by the following theorem
\begin{theorem}
$f_{opt} \le E[f_{SDR-round}] \le \frac{2}{\pi}f_{opt}$
\label{eq:theorem-bound}
\end{theorem}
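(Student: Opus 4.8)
The plan is to adapt the Goemans--Williamson randomized-rounding analysis~\cite{Goemans:1995:IAA:227683.227684,nesterov1998semidefinite} to the present minimization setting, exploiting the fact that $\B \preceq 0$ established in the proposition. The two inequalities are proved separately. For the lower bound $f_{opt} \le E[f_{SDR-round}]$, I would simply note that every realization of the rounded vector $\hat{\x} = sgn(\mathbf{\xi})$ satisfies ${\hat{x}_i}^2 = 1$, so $\hat{\x}$ is a feasible point of (\ref{eq:bqp2}). Since $f_{opt}$ is the global minimum of (\ref{eq:bqp2}), we have ${\hat{\x}}^T \B \hat{\x} \ge f_{opt}$ for each draw, and taking expectations gives $E[f_{SDR-round}] \ge f_{opt}$ at once.

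For the upper bound the first step is the classical identity for the expected product of rounded signs. Writing $\X^* = \V^T \V$ with unit-norm columns (possible since $\X^* \succeq 0$ and $diag(\X^*) = \1$) and generating $\mathbf{\xi}$ through a standard Gaussian, one obtains $E[\hat{x}_i \hat{x}_j] = \frac{2}{\pi}\arcsin(X^*_{ij})$. Consequently
\begin{equation}
E[f_{SDR-round}] = \sum_{i,j} b_{ij}\, E[\hat{x}_i \hat{x}_j] = \frac{2}{\pi}\, trace\!\left(\B\, \arcsin[\X^*]\right),
\end{equation}
where $\arcsin[\X^*]$ denotes the matrix obtained by applying $\arcsin$ to each entry of $\X^*$, and $b_{ij}$ are the entries of $\B$.

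The second, and main, step is to compare $trace(\B\,\arcsin[\X^*])$ with the relaxation value $f_{SDR} = trace(\B\X^*)$. I would establish that $\arcsin[\X^*] - \X^* \succeq 0$. This follows from the Taylor expansion $\arcsin(t) = t + \frac{1}{6}t^3 + \frac{3}{40}t^5 + \cdots = \sum_{k\ge 0} c_k\, t^{2k+1}$ whose coefficients satisfy $c_k \ge 0$: applying it entry-wise expresses $\arcsin[\X^*] - \X^* = \sum_{k\ge 1} c_k\, (\X^*)^{\circ(2k+1)}$ as a nonnegative combination of odd Hadamard powers of $\X^*$, each positive semidefinite by the Schur product theorem since $\X^* \succeq 0$. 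Because $\B \preceq 0$ and $\arcsin[\X^*]-\X^* \succeq 0$, the trace of their product is non-positive, so $trace(\B\,\arcsin[\X^*]) \le trace(\B\X^*) = f_{SDR}$. Finally, since (\ref{SDR_norank}) is a relaxation of the minimization problem (\ref{eq:bqp2}) we have $f_{SDR} \le f_{opt} \le 0$; multiplying through by $\frac{2}{\pi}>0$ and chaining gives $E[f_{SDR-round}] \le \frac{2}{\pi} f_{SDR} \le \frac{2}{\pi} f_{opt}$, which completes the upper bound.

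The hard part will be the semidefiniteness claim $\arcsin[\X^*] \succeq \X^*$; it is precisely what makes the negative-semidefinite structure of $\B$ pay off. The delicate points are verifying that all Taylor coefficients of $\arcsin$ are nonnegative and invoking the Schur product theorem correctly for the entry-wise odd powers. One must also track carefully the sign reversals arising from the fact that we minimize rather than maximize, so that the relaxation sits below $f_{opt}$ and both quantities are non-positive, making the factor $\frac{2}{\pi}<1$ act as a multiplicative upper bound rather than a lower bound.
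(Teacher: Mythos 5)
Your proposal is correct and follows essentially the same route as the paper: feasibility of every rounded point gives the lower bound, and the Goemans--Williamson identity $E[f_{SDR-round}] = \frac{2}{\pi}\,trace\left(\B \arcsin(\X^*)\right)$ combined with $\arcsin(\X^*) - \X^* \succeq 0$, $\B \preceq 0$, and the relaxation inequality $f_{SDR} \le f_{opt}$ gives the upper bound. The only difference is that you prove the key fact $\arcsin(\X^*) - \X^* \succeq 0$ directly via the nonnegative Taylor coefficients of $\arcsin$ and the Schur product theorem, whereas the paper simply cites it from the literature.
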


\begin{proof}
\begin{itemize}
\item Because solving SDR (\ref{SDR_norank}), following by rounding procedure, is relaxation approach to achieve a feasible solution for (\ref{eq:bqp2}), we have
\begin{equation}
f_{opt} \le f_{SDR-round}
\label{eq:opt_sdr-round}
\end{equation}  
\item  
Given $\X^*$, i.e., the global minimum solution of (\ref{SDR_norank}), let the global optimum objective value of (\ref{SDR_norank}) at $\X^*$ be $f_{SDR} = trace(\B\X^*)$; given $\hat{\x}$, i.e., the solution of (\ref{eq:bqp2}), achieved from $\X^*$ by applying the rounding procedure, in~\cite{Goemans:1995:IAA:227683.227684}, the authors show that the expected value of $f_{SDR-round}$ is 
\begin{equation}
E[f_{SDR-round}] =  E[{\hat{\x}}^T \B \hat{\x}] = \frac{2}{\pi} trace(\B arcsin(\X^*))
\label{eq:theorem_bound}
\end{equation}
where the $arcsin$ function is applied componentwise. Note that since $\X^* \succeq 0$  and $diag(\X^*)=\mathbf{1}$, the absolute value of its elements is $\le 1$. Hence $arcsin(\X^*)$ is well defined. 
 
Because $\X^* \succeq 0$, we have $arcsin(\X^*) - \X^* \succeq 0$~\cite{Ben-Tal:2001:LMC:502969}. Because $\B \preceq 0$, we have 
\begin{eqnarray}
&&trace\left(\B(arcsin(\X^*) - \X^*)\right)  \le 0 \nonumber \\
\Leftrightarrow && trace(\B arcsin(\X^*)) \le trace (\B \X^*) \nonumber \\
\Leftrightarrow && \frac{2}{\pi}trace(\B arcsin(\X^*)) \le \frac{2}{\pi}trace (\B \X^*)
 \nonumber \\
\Leftrightarrow && E[f_{SDR-round}] \le \frac{2}{\pi}trace (\B \X^*) \nonumber \\
\Leftrightarrow && E[f_{SDR-round}] \le \frac{2}{\pi}f_{SDR}
\label{eq:expected-sdr}
\end{eqnarray} 
Because (\ref{SDR_norank}) is a relaxation of (\ref{eq:bqp2}) (by removing the rank-one constraint), we have 
\begin{equation}
f_{SDR} \le f_{opt} 
\label{eq:sdr-opt}
\end{equation}
By combining (\ref{eq:expected-sdr}) and (\ref{eq:sdr-opt}), we have 
\begin{equation}
E[f_{SDR-round}] \le \frac{2}{\pi}f_{opt}
\label{eq:srd-round_opt}
\end{equation}
The proof is done by (\ref{eq:opt_sdr-round}) and (\ref{eq:srd-round_opt}). $\square$
\end{itemize}
\end{proof}

\vspace{-0.5cm}
\subsubsection{The advantages and disadvantages of SDR approach}
As mentioned, because (\ref{SDR_norank}) is a convex optimization, its global optimal solution can be achieved by using convex optimization methods. Using randomized rounding to convert SDR's solution to binary solution provides a good bound on the objective value. However, there are two main concerns, i.e., memory and computational complexity, with SDR approach. SDR approach works in the space of $n^2$ of variables, instead of $n$ as original problem.  
By using interior-point method which is traditional approach for solving SDP problem, (\ref{SDR_norank}) is solved with high complexity, i.e, $\O(n^{4.5})$\cite{5447068}. These two disadvantages may limit the capacity of SDR approach when $n$ is large. 
\vspace{-0.3cm}
\subsection{Augmented Lagrangian approach}
\label{subsec:AL}
We propose to directly solve the equality constrained minimization (\ref{eq:bqp1}) using Augmented Lagrangian (AL) method. 

\vspace{-0.3cm}
\subsubsection{Formulation}
In our formulation, we rewrite the binary constraints of (\ref{eq:bqp1}) in vector form as $\Phi(\x) = \left[(x_1)^2-1,...,(x_n)^2-1\right]^T$; let $\Lambda = [\lambda_1,...,\lambda_n]^T$ be Lagrange multipliers. By using augmented Lagrangian method, we target to minimize the following unconstrained augmented Lagrangian function
\begin{eqnarray}
\mathcal{L}(\x,\Lambda;\mu) &=&\x^T\A\x - \Lambda^T \Phi(\x) + \frac{\mu}{2}\norm{\Phi(\x)}^2
\label{eq:LA}
\end{eqnarray}
where $\mu$ is penalty parameter on the constrains. The AL algorithm for solving (\ref{eq:LA}) is presented in Algorithm \ref{alg1}. 
When $\mu$ is large, we penalize the constraint violation severely, thereby forcing the minimizer of the augmented Lagrangian function (\ref{eq:LA}) closer to the feasible region of the original constrained function (\ref{eq:bqp1}). It has been theoretically shown in~\cite{Nocedal06} that because the Lagrange multiplier $\Lambda$ is improved at every step of the algorithm, it is not necessary to take $\mu \rightarrow \infty$ in order to achieve a local optimum of (\ref{eq:bqp1}). 

\begin{algorithm}[!t]
	\scriptsize
	\caption{Augmented Lagrangian Algorithm}
	\begin{algorithmic}[1] 
		\Require 
			\Statex matrix $\A$; starting points $\x_0^s$ and $\Lambda_0$; positive numbers: $\mu_0$, $\alpha$, $\epsilon$; iteration number $T$.
		\Ensure 
			\Statex Solution $\x$
			\Statex 
			\For{$t = 0 \to T$}
			\State Find an approximate minimizer $\x_t$ of (\ref{eq:LA}), i.e., $\x_t = \underset{\x}{\arg\min}\  \mathcal{L}(\x,\Lambda_t;\mu_t)$, using $\x_t^s$ as starting point.
			\If{$t>1$ and $|{\x_t}^T\A\x_t-{\x_t^s}^T\A \x_t^s|<\epsilon$}
					\State break;
		    \EndIf
			\State Update Lagrange multiplier: $\Lambda_{t+1} = \Lambda_t - \mu_t\Phi(\x_t)$
			\State Update penalty parameter: $\mu_{t+1} = \alpha\mu_t$
			\State Set starting point for the next iteration to $\x_{t+1}^s = \x_t$
			\EndFor
			\State Return $\x_t$ 
    \end{algorithmic}
    \label{alg1}
\end{algorithm}

\vspace{-0.4cm}
\subsubsection{Complexity analysis of Augmented Lagrangian approach}
The gradient of (\ref{eq:LA}) is computed as follows
\begin{equation}
\nabla_x \mathcal{L} = 2\A\x - 2\Lambda \odot \x + 2\mu \Phi(\x) \odot \x 
\label{eq:LA_grad}
\end{equation}
where $\odot$ denotes Hadamard product.

The complexity for computing the objective function (\ref{eq:LA}) is $\O(n^2)$ and for computing the gradient (\ref{eq:LA_grad}) is also $\O(n^2)$. For finding the approximate minimizer $\x_t$ at line 2 of the Algorithm \ref{alg1}, we use LBFGS~\cite{Nocedal06_LBFGS} belonging to the family of quasi-Newton's methods. There are two main benefits with LBFGS. Firstly, the approximated Hessian matrix does not need to be explicitly computed when computing the search direction. By using two-loop recursion~\cite{Nocedal06_LBFGS}, the complexity for computing the search direction is only $\O(n)$. Hence
the computational complexity of LBFGS is $\O(t_1n^2)$, where $t_1$ is number of iterations of LBFGS. Hence, the computational complexity of Algorithm \ref{alg1} is $\O(tt_1n^2)$. In our empirical experiments, $t,t_1 \ll n$, e.g., the Algorithm \ref{alg1} converges for $t_1 \le 50$ and $t \le 10$.
Secondly, because the Hessian matrix does not need to be explicitly computed, the memory complexity of LBFGS is only $\O(n)$. Table~\ref{tab:SDR_AL} summarizes the memory and the computational complexity of SDR and AL approaches. We can see that AL approach advances SDR approach on both memory and computational complexity. 
However, the performance of AL is slightly lower than SDR. We provide detail analysis on their performance in the experimental section.
\begin{table}[!t]
\centering
\caption{Memory and computational complexity of SDR and AL}
\begin{tabular}{|c|c|c|} 
\hline
	&Computational &Memory  \\ \hline
SDR &$\O(n^{4.5})$	&$\O(n^2)$	 \\ \hline
AL  &$\O(tt_1n^2)$; $t_1 \le 50$; $t \le 10$ 	&$\O(n)$ \\	\hline
\end{tabular}
\label{tab:SDR_AL}
\vspace{-0.3cm}
\end{table}

\vspace{-0.4cm}
\subsubsection{Initialization in Augmented Lagrangian}
The Algorithm \ref{alg1} needs the initialization for $\x$ and $\Lambda$. A good initialization not only makes the algorithm robust but also leads to fast convergence. 

The initialization of $\x$ is first done by spectral relaxation, resulting the continuous solution. The continuous solution is then binarized, resulting binary solution. Specifically, we first solve (\ref{eq:bqp1}) by using spectral relaxation, i.e., 
\begin{equation}
\min_{\norm{\x}^2 = n} {\x}^T \A \x 
\label{eq:spectral}
\end{equation}
The closed-form solution of (\ref{eq:spectral}) is $\x = \sqrt{n}\u_n$, where $\u_n$ is the eigenvector corresponding to the smallest eigenvalue of $\A$.
We then optimally binarize from the first element to the last element of $\x$. When solving the binarizing for $i^{th}$ element of $\x$, we fix all remaining elements (elements 1 to $i-1$ are already binary and elements $i+1$ to $n$ are still continuous) and solve the following optimization
\begin{equation}
\min_{x_i \in \{-1,1\}} {\x}^T \A \x 
\label{eq:binary-element}
\end{equation}
By expanding and removing constant terms, (\ref{eq:binary-element}) is equivalent to 
\begin{equation}
\min_{x_i \in \{-1,1\}} x_i \left( {\bar{\x}}^T \bar{\a}_i\right) 
\label{eq:binary-element_1}
\end{equation}
where $\bar{\x}$ is vector $\x$ excluding $x_i$; $\bar{\a}_i$ is $i^{th}$ column of $\A$ excluding $i^{th}$ element. It is easy to see that the optimal solution of (\ref{eq:binary-element_1}) is $x_i = -\textrm{sgn}({\bar{\x}}^T \bar{\a}_i)$. After solving the binarizing for all elements of $\x$, the resulted binary vector is used as initialization, i.e. $\x_0^s$, in the Algorithm ~\ref{alg1}.

After getting $\x_0^s$, given $\mu_0$, we compute the corresponding $\Lambda_0$ by using the optimality condition for unconstrained minimization (\ref{eq:LA}), i.e., $\nabla_x \mathcal{L} = 0$.
By assigning (\ref{eq:LA_grad}) to zeros and using the fact that $\Phi(\x_0^s)$ equals to zeros, we have
\begin{equation}
\Lambda_0 = \left(\A\x_0^s\right) ./ \x_0^s
\end{equation}
where $./$ operator denotes element-wise division. 
\vspace{-0.4cm}
\subsection{Relationship to existing methods}
\label{subsec:diff}

In~\cite{DBLP:conf/iccv/LinSSH13,CVPR2014Lin}, the authors use two-step hashing approach for supervised hashing while our formulations are for both supervised and unsupervised hashing. 
In~\cite{DBLP:conf/iccv/LinSSH13}, when solving for row $k$ of $\Z$, i.e., $\z^{(k)}$, the authors also form the problem as a binary quadratic problem. 
To handle this NP-hard problem, the authors relax the binary constraint $\z^{(k)} \in \{-1,1\}^n$ to $\z^{(k)} \in [-1,1]^n$. The relaxed problem is then solved by bound-constrained L-BFGS method~\cite{DBLP:journals/toms/ZhuBLN97}. 
In~\cite{CVPR2014Lin}, in stead of solving for whole row $k$ of $\Z$ at a time as~\cite{DBLP:conf/iccv/LinSSH13}, the authors first split $\z^{(k)}$ into several blocks. The optimization is done for each block while keeping other blocks fixed. When solving one block, they consider the problem as a graph partition problem and use GraphCut algorithm~\cite{DBLP:journals/pami/BoykovVZ01} for finding a local optimum. 
 
Our proposed methods differ from\cite{DBLP:conf/iccv/LinSSH13,CVPR2014Lin} in solving BQP. With Augmented Lagrangian (AL) approach, we consider the original constraint, without relaxing the variables to continuous domain. With Semidefinite Relaxation (SDR) approach, in spite of removing the rank one constraint, we theoretically show that the objective value of the binary solution achieved by applying randomized rounding on SDR solution is well bounded. Note that in~\cite{DBLP:conf/iccv/LinSSH13,CVPR2014Lin}, the bounding on the objective function of their binary solution is not investigated. 

The very recent work~\cite{Mukherjee_2015_ICCV} relies on two-step hashing for unsupervised hashing. The authors introduce two approaches for inferencing binary codes which try to preserve the original distance between samples. In their work,  by considering the binary constraint on $\Z$ as $\Z \in \{0,1\}^{L \times n}$, the Hamming distance matrix $\D$ is computed as $\D = \Z^T\E^T+\E\Z-2\Z^T\Z$, where $\E$ is a matrix of all 1$s$. In the first approach, the authors use augmented Lagrangian for solving the following optimization 
\begin{eqnarray}
\ &&\min_{\Z,\Y} \norm{\S - \Y}^{2} \nonumber\\
\ &&s.t.\ \Y = \Z^T\E^T+\E\Z-2\Z^T\Z;\  \Z \in [0,1]^{L \times n}  
\label{eq:NMF-AL_1}
\end{eqnarray}
where $\S$ is original distance similarity matrix; $\Y$ is an auxiliary variable. 

In the second approach, the authors form the learning of binary code $\Z$ as a non-negative matrix factorization with additional constraints as follows
\begin{eqnarray}
&&\min_{\Z_v,\H}\norm{\S_v - \M\H\Z_v}^2 \nonumber \\
&&s.t.\ \H = \I \otimes (1-\Z_v); \ \Z_v \in [0,1]^{Ln}
\label{eq:NMF}
\end{eqnarray}
where $\S_v$ and $\Z_v$ are  vector forms of $\S$ and $\Z$, respectively; $\M$ is a constant binary matrix; $\I$ is identity matrix; $\otimes$ is Kronecker product~\cite{Mukherjee_2015_ICCV}. 

The differences between our AL, SDR approaches and two above approaches of~\cite{Mukherjee_2015_ICCV} are quite clear. 
We use the coordinate descent, i.e., solving one row of $\Z$ at a time while  the optimization in~\cite{Mukherjee_2015_ICCV} is on the space of $\Z$. This may limit their  approaches when the size of $\Z$ increases (i.e., when increasing the code length $L$ and the number of training samples $n$). 
In both their approaches, to handle the difficulty of binary constraint, they solve the relaxed problem, i.e., relaxing the constraint $\Z \in \{0,1\}^{L \times n}$ to $\Z \in [0,1]^{L \times n}$. On the other hand, our AL approach solves the constraint strictly; with SDR approach, although we remove the rank one constraint, we prove that the resulted objective value is well bounded.  
Furthermore, in their first approach~\cite{Mukherjee_2015_ICCV}, the Lagrangian function only considers the first constraint of~(\ref{eq:NMF-AL_1}), i.e., in their work, the second constraint $\Z \in [0,1]^{L \times n}$ is not considered when finding the minimizer of the augmented Lagrangian function. After solving for the minimizer of the augmented Lagrangian function, 
the resulted solution is projected onto the feasible set $\Z \in [0,1]^{L \times n}$. 
Contrary to their approach, in our augmented Lagrangian function (\ref{eq:LA}), the binary constraint is directly incorporated and encoded as $\Phi(\x)$, and is solved during the optimization. 


\vspace{-0.3cm} 
\section{Experiments}
\label{sec:exp}
\vspace{-0.1cm}
In this section we first evaluate and compare binary inference methods. We then evaluate and compare our hashing framework, i.e. using the inferred binary codes for learning hash functions,  
  to the state of the art.  
\begin{figure*}[!t]
\centering
\subfigure[CIFAR10]{
       \includegraphics[scale=0.28]{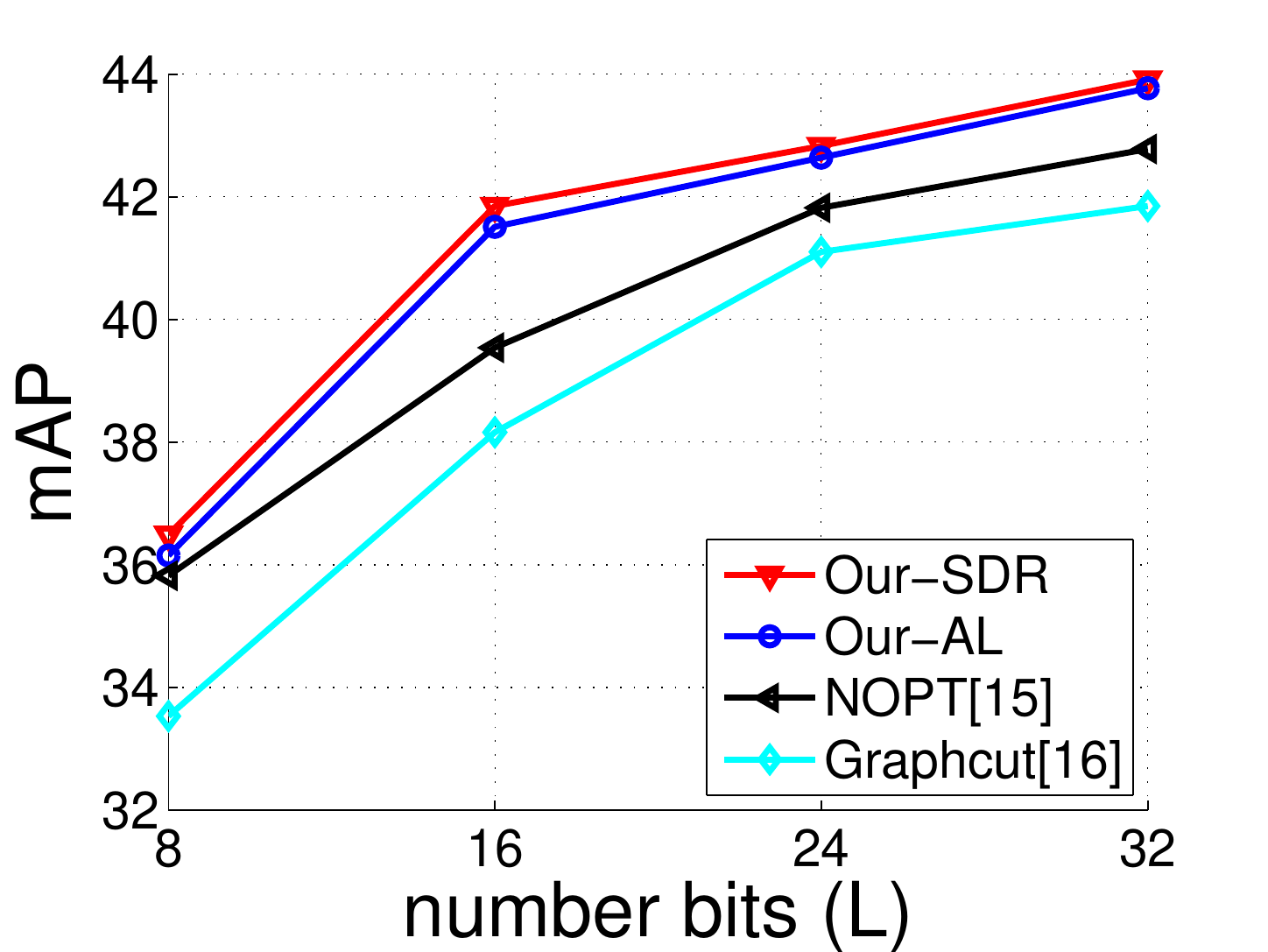}
       \label{fig:binary-infer-sup-cifar_mAP}
}\hspace{-0.5cm}
\subfigure[MNIST]{
       \includegraphics[scale=0.28]{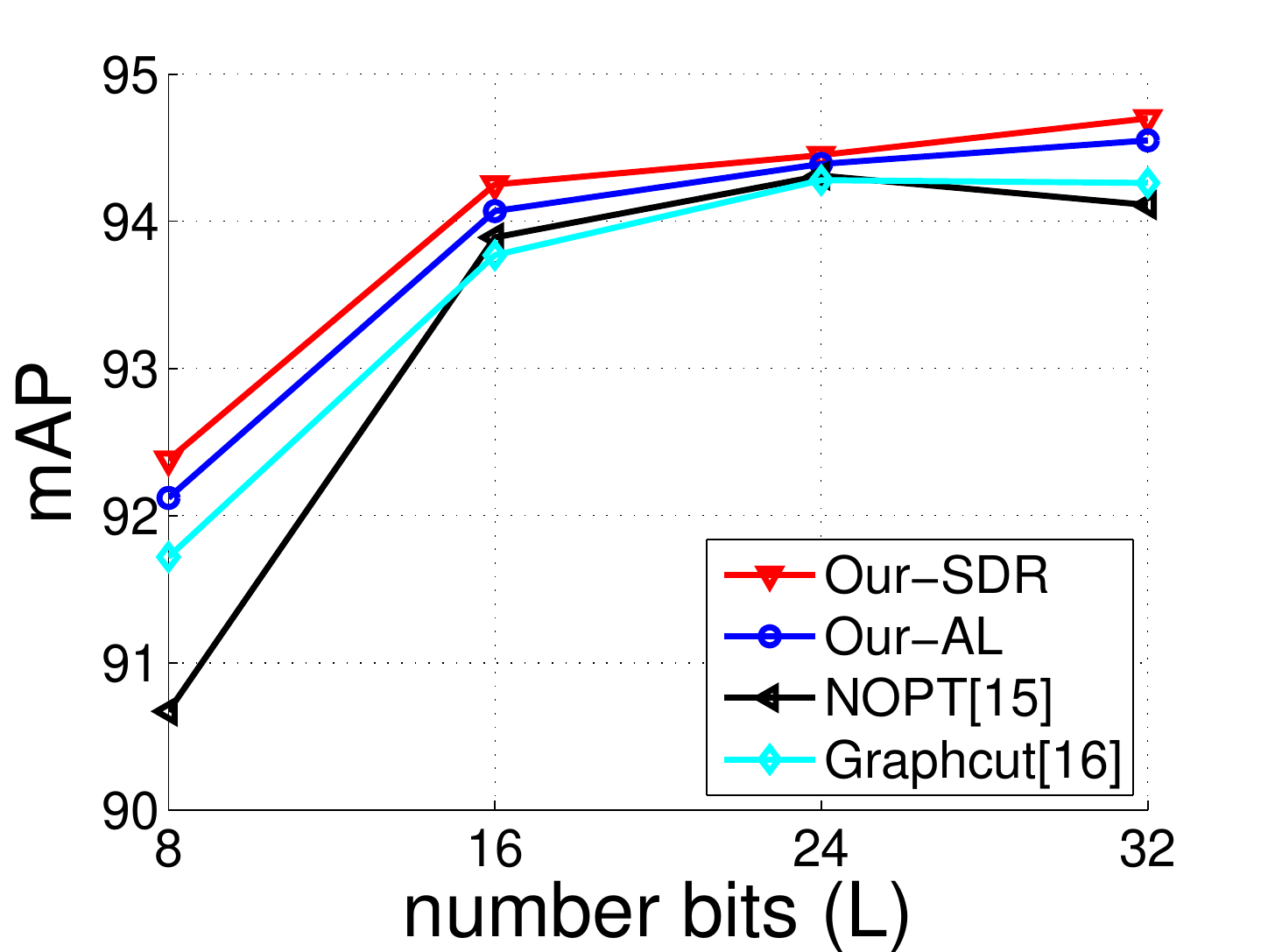} 
       \label{fig:binary-infer-sup-mnist_mAP}
}\hspace{-0.5cm}
\subfigure[SUN397]{
       \includegraphics[scale=0.28]{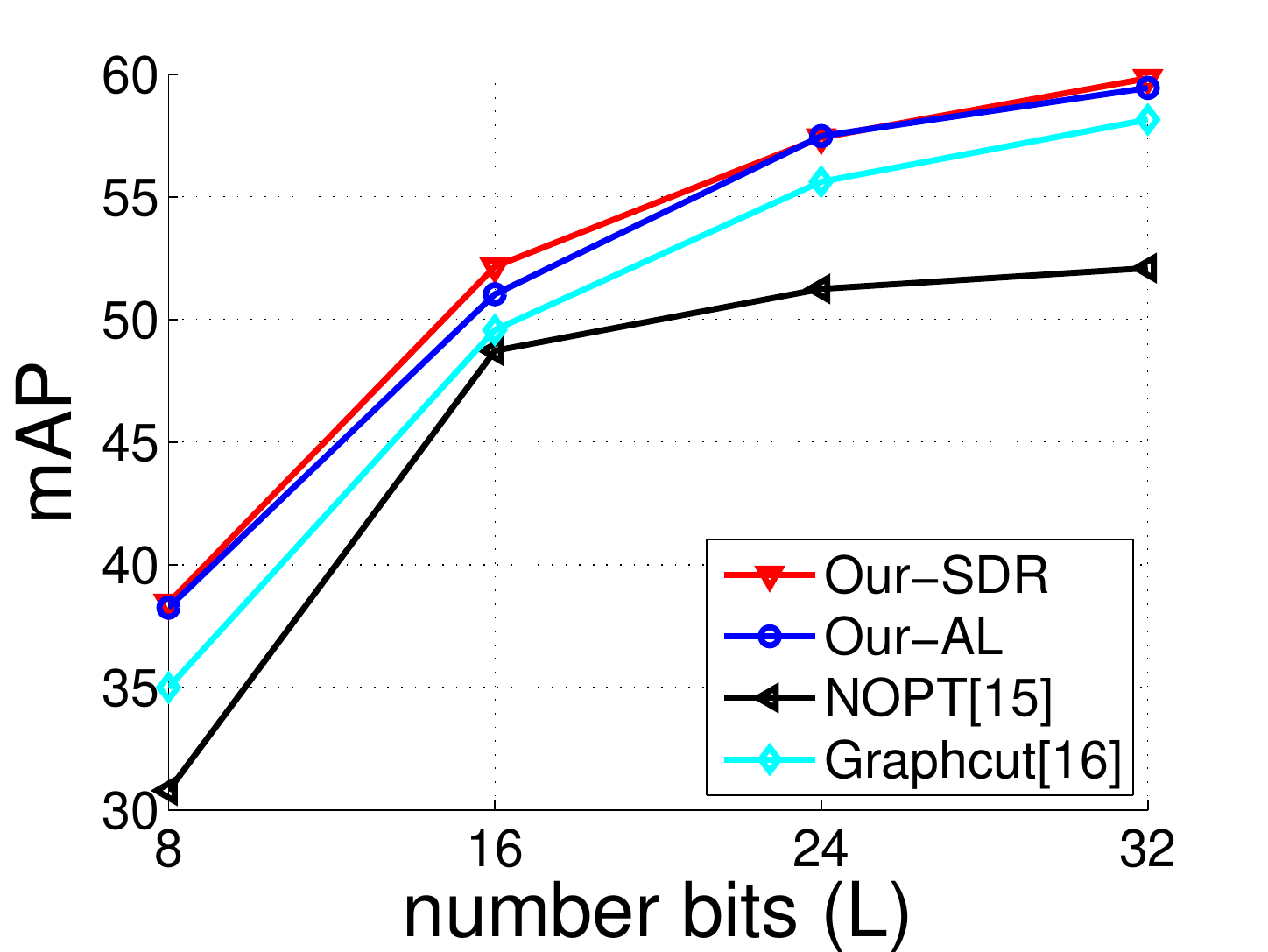}
       \label{fig:binary-infer-sup-sun397_mAP}
}
\caption[]{mAP comparison of different binary inference methods.}
\vspace{-0.2cm}
\label{fig:binary-infer-sup-cifar10-mnist-sun397-mAP}
\end{figure*}
\begin{table}[!t]
   \centering
   \footnotesize
   \caption{Precision at Hamming distance $r=2$ of different binary inference methods on CIFAR10, MNIST, SUN397.}
    \begin{tabular}{|l|c| c| c| c|c| c| c| c|c| c| c| c|}
		\hline
	  \multirow{2}{*}{} & \multicolumn{4}{|c|}{CIFAR10} & \multicolumn{4}{|c|}{MNIST} & \multicolumn{4}{|c|}{SUN397}\\
\cline{1-13}	$L$    &8 &16 &24 &32    &8 &16 &24 &32  &8 &16 &24 &32   \\ \hline 
Our-SDR						      &30.57 &46.61 &48.22 &48.43   &86.33 &93.86 &94.26 &94.56 
		&12.11 &59.19 &62.98 &61.78 \\ \hline
Our-AL	                          &30.07 &46.33 &47.95 &48.02   &85.96 &93.49 &94.09 &94.36 
		&12.03 &57.20 &63.14 &61.45 \\ \hline
NOPT\cite{DBLP:conf/iccv/LinSSH13}&29.09 &45.69 &47.41 &47.66   &81.14 &93.41 &93.88 &93.84 
		&10.08 &55.70 &60.28 &59.30 \\ \hline
Graphcut\cite{CVPR2014Lin}        &28.50 &44.64 &47.31 &47.35   &80.33 &93.31 &93.67 &93.98
		&10.70 &57.02 &61.92 &58.36 \\ \hline
	  \end{tabular}
	  \label{tab:binary-infer-sup-cifar10-mnist-sun397-pre}
	  \vspace{-0.3cm}
\end{table}
\vspace{-0.3cm}
\subsection{Dataset, implementation note, and evaluation protocol}
\subsubsection{Dataset}
CIFAR10~\cite{Krizhevsky09} dataset consists of 60,000 images of 10 classes. The training set (also used as database for retrieval)
 contains 50,000 images. The query set contains 10,000 images. Each image is represented by 320-$D$ GIST feature~\cite{gist}.

MNIST~\cite{mnistlecun} dataset consists of 70,000 handwritten digit
images of 10 classes. The training set (also used as database for retrieval)
 contains 60,000 images. The query set contains 10,000 images. Each image is represented by a 784-$D$ gray-scale feature vector by using its intensity.

SUN397~\cite{DBLP:conf/cvpr/XiaoHEOT10} contains about $108K$ images from 397
scene categories. We use a subset of this dataset including 42 categories with each containing more than 500 images (with total about $35K$ images). The query set contains 4,200 images (100 images per class) randomly sampled from the dataset. The rest images are used as database for retrieval. Each image is represented by a 4096-$D$ CNN feature produced by AlexNet~\cite{jia2014caffe}. 

For CIFAR10 and MNIST, we randomly select $500$ training samples from each class and use them for learning, i.e., using their descriptors or their labels for computing similarity matrix in unsupervised or supervised hashing. 
For SUN397, we randomly select $120$ training samples from each class for learning.

\vspace{-0.4cm}
\subsubsection{Implementation note}
After the binary code inference step with SDR/AL, the hash functions are defined by SVM with RBF kernel. The max iteration number $max\_iter$ in Algorithm~\ref{alg0} is empirically set to 3. For Augmented Lagrangian approach, its parameter in Algorithm~\ref{alg1} are empirically set by cross validation as follows: $T = 10$; $\mu_0 = 0.1$; $\alpha = 10$; $\epsilon = 10^{-6}$.

\vspace{-0.4cm}
\subsubsection{Evaluation protocol}
The ground truths of queries are defined by the class labels from the datasets. We use the following evaluation metrics which have been used in the state of the art~\cite{DBLP:conf/cvpr/GongL11,BA_CVPR15,CVPR12:Hashing,DBLP:conf/iccv/LinSSH13} to measure the performance of methods. 1) mean Average Precision (mAP); 2) precision of Hamming radius $2$ (precision$@2$) which measures precision on retrieved images having Hamming distance to query $\le 2$ (if no images satisfy, we report zero precision).  

\vspace{-0.4cm}
\subsection{Comparison between binary inference methods}
We compare our proposed methods to other binary inference methods including nonlinear optimization (NOPT) approach (i.e. using bound-constrained L-BFGS) in~\cite{DBLP:conf/iccv/LinSSH13}, Graphcut approach in~\cite{CVPR2014Lin}. 
For compared methods, we use the implementations and the suggested parameters provided by the authors.
Because the implementation of Augmented Lagrangian Method (ALM) and Nonnegative Matrix Factorization (NMF) in~\cite{Mukherjee_2015_ICCV} is not available, it is unable to compare the binary inference with those approaches. 

The proposed AL, SDR, and the compared methods require an initialization for binary code matrix $\Z$. In our work, this is the initialization at line $1$ of the Algorithm~\ref{alg0}. To make a fair comparison, we use the same initialization, i.e. the one is proposed in~\cite{Mukherjee_2015_ICCV}, for all methods. We first use PCA to project the  training matrix $\X$ from $D$ to $L$ dimensions. The projected data is then mean-thresholded, resulted binary values. After the binary code inference step, the SVM with RBF kernel is used as hash functions for all compared methods. 

Fig.~\ref{fig:binary-infer-sup-cifar10-mnist-sun397-mAP} and Table~\ref{tab:binary-infer-sup-cifar10-mnist-sun397-pre} present the mAP and the precision of Hamming radius $r$ = 2 (precision@2) of methods. In term of mAP, the proposed AL and SDR consistently outperform NOPT\cite{DBLP:conf/iccv/LinSSH13} and Graphcut\cite{CVPR2014Lin} at all code lengths. The improvement is more clear on CIFAR10 and SUN397. In term
of precision@2, the proposed AL and SDR also outperform NOPT\cite{DBLP:conf/iccv/LinSSH13} and Graphcut\cite{CVPR2014Lin}. The improvement is more clear at low code length, i.e., $L=8$. The improvement of AL and SDR over NOPT\cite{DBLP:conf/iccv/LinSSH13} and Graphcut\cite{CVPR2014Lin} means that the binary codes achieved by proposed methods are better than those achieved by NOPT\cite{DBLP:conf/iccv/LinSSH13} and Graphcut\cite{CVPR2014Lin}. 

In comparison AL and SDR, Fig.~\ref{fig:binary-infer-sup-cifar10-mnist-sun397-mAP} and Table~\ref{tab:binary-infer-sup-cifar10-mnist-sun397-pre} show SDR approach slightly outperforms AL approach. However, as analyzed in sections~\ref{subsec:SDR} and~\ref{subsec:AL}, AL approach advances SDR approach in both memory and computational complexity. 

\vspace{-0.4cm}
\subsection{Comparison with the state of the art}
We evaluate and compare the proposed SDR and AL to state-of-the-art supervised hashing methods including Binary Reconstructive Embedding (BRE)~\cite{Kulis_learningto}, ITQ-CCA~\cite{DBLP:conf/cvpr/GongL11}, KSH~\cite{CVPR12:Hashing}, Two-Step Hashing (TSH)~\cite{DBLP:conf/iccv/LinSSH13}, FashHash~\cite{CVPR2014Lin} and unsupervised hashing methods including ITQ~\cite{DBLP:conf/cvpr/GongL11}, Binary Autoencoder (BA)~\cite{BA_CVPR15}, Spherical Hashing (SPH)~\cite{CVPR12:SphericalHashing}, K-means Hashing
(KMH)~\cite{DBLP:conf/cvpr/HeWS13}. For all compared methods, we use the implementations and the suggested parameters provided by the authors.

\vspace{-0.4cm}
\subsubsection{Supervised hashing results}
\begin{figure*}[!t]
\centering
\subfigure[CIFAR10]{
       \includegraphics[scale=0.28]{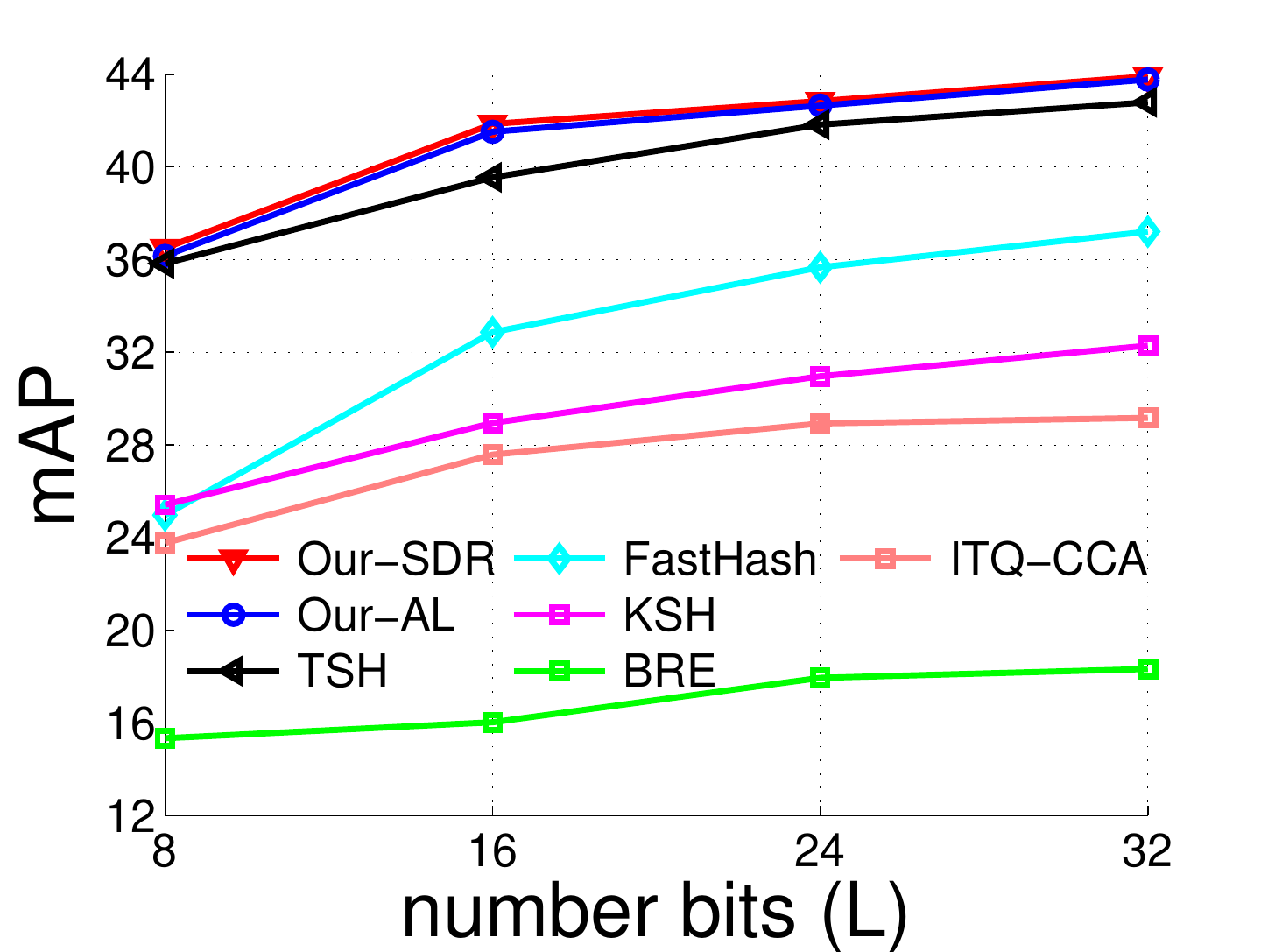}
       \label{fig:binary-infer-sup-cifar_mAP-soa}
}\hspace{-0.5cm}
\subfigure[MNIST]{
       \includegraphics[scale=0.28]{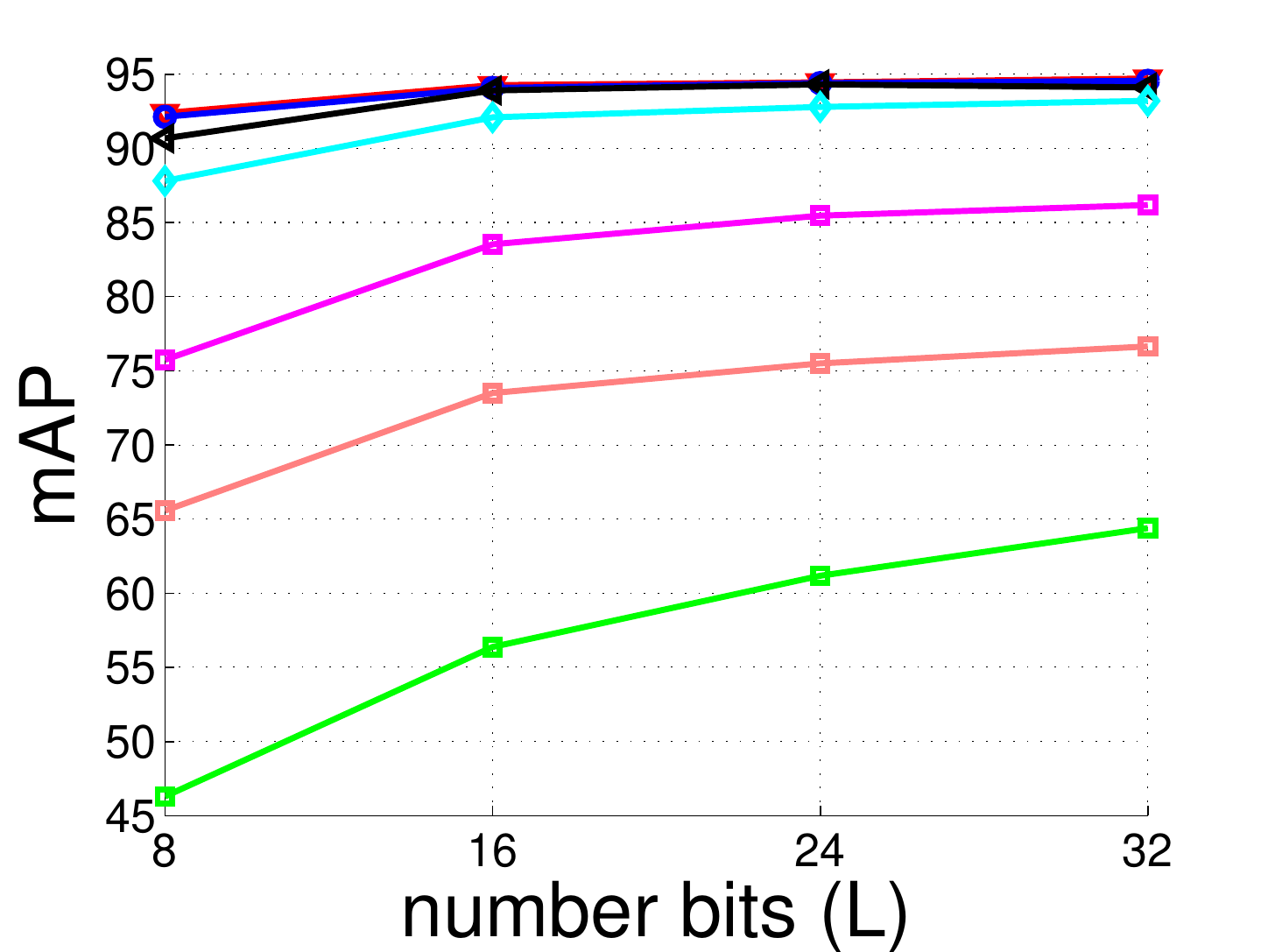} 
       \label{fig:binary-infer-sup-mnist_mAP-soa}
}\hspace{-0.5cm}
\subfigure[SUN397]{
       \includegraphics[scale=0.28]{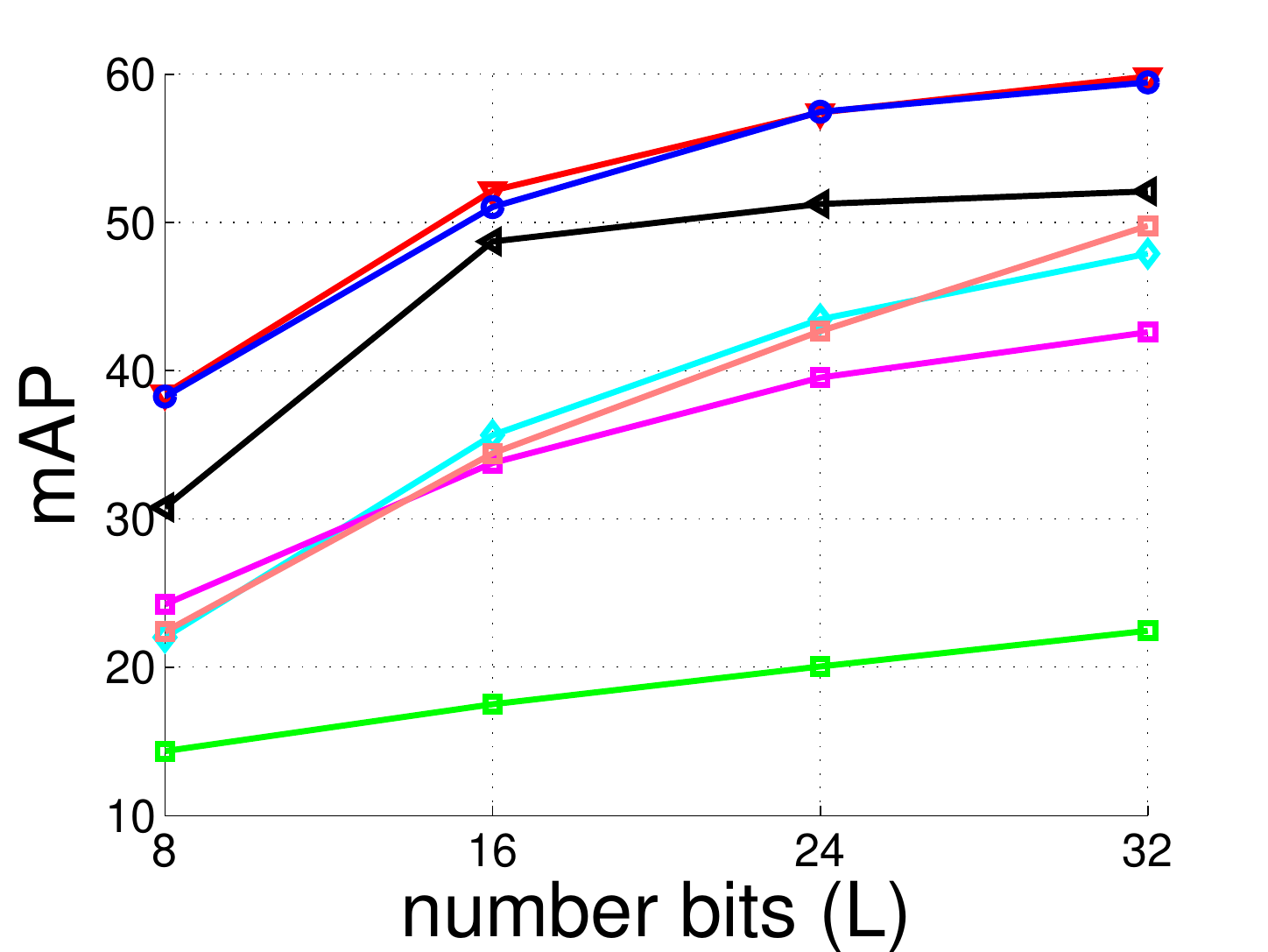}
       \label{fig:binary-infer-sup-sun397_mAP-soa}
}
\caption[]{mAP comparison with state-of-the-art supervised hashing methods.}
\label{fig:soa-sup-cifar10-mnist-sun397-mAP}
\end{figure*}

\begin{table}[!t]
   \centering
   \footnotesize
   \caption{Precision at Hamming distance $r=2$ comparison with state-of-the-art supervised hashing methods on CIFAR10, MNIST, and SUN397.}
    \begin{tabular}{|l|c| c| c| c|c| c| c| c|c| c| c| c|}
		\hline
	  \multirow{2}{*}{} & \multicolumn{4}{|c|}{CIFAR10} & \multicolumn{4}{|c|}{MNIST} & \multicolumn{4}{|c|}{SUN397}\\
\cline{1-13}	$L$    &8 &16 &24 &32    &8 &16 &24 &32  &8 &16 &24 &32   \\ \hline 
Our-SDR						      &30.57 &46.61 &48.22 &48.43   &86.33 &93.86 &94.26 &94.56 
		&12.11 &59.19 &62.98 &61.78 \\ \hline
Our-AL	                          &30.07 &46.33 &47.95 &48.02   &85.96 &93.49 &94.09 &94.36 
		&12.03 &57.20 &63.14 &61.45 \\ \hline
TSH\cite{DBLP:conf/iccv/LinSSH13} &29.09 &45.69 &47.41 &47.66   &81.14 &93.41 &93.88 &93.84 
		&10.08 &55.70 &60.28 &59.30 \\ \hline                   
FastHash\cite{CVPR2014Lin}       	&22.85 &40.81 &42.25 &32.49 &66.22 &92.14 &92.79 &91.41 
		&8.91  &46.84 &51.84 &39.40 \\ \hline                   
KSH\cite{CVPR12:Hashing}		 	&24.26 &37.26 &40.95 &36.52 &54.29 &86.94 &89.31 &88.33
		&11.79 &39.41 &51.28 &46.48 \\ \hline
BRE\cite{Kulis_learningto}       	&16.19 &22.74 &28.87 &18.41 &36.67 &70.59 &81.45 &82.83 
		&9.62  &27.93 &39.42 &30.39 \\ \hline
ITQ-CCA\cite{DBLP:conf/cvpr/GongL11}&22.66 &35.36 &38.39 &39.13 &53.46 &79.70 &82.98 &83.43
		&11.67 &36.35 &49.19 &46.81 \\ \hline
	  \end{tabular}
	  \label{tab:soa-sup-cifar10-mnist-sun397-pre}
	  \vspace{-0.3cm}
\end{table}
The mAP and precision@2 obtained by supervised hashing methods with varying code lengths are shown in Fig.~\ref{fig:soa-sup-cifar10-mnist-sun397-mAP} and Table~\ref{tab:soa-sup-cifar10-mnist-sun397-pre}, respectively. The most competitive method to AL and SDR is TSH~\cite{DBLP:conf/iccv/LinSSH13}. On CIFAR10 and MNIST datasets, the proposed AL and SDR  slightly outperform TSH while outperforming the remaining methods a fair margin. On SUN397 dataset, AL and SDR significantly outperform all compared methods. 

\vspace{-0.4cm}
\subsubsection{Unsupervised hashing results}
\begin{figure*}[!t]
\centering
\subfigure[CIFAR10]{
       \includegraphics[scale=0.28]{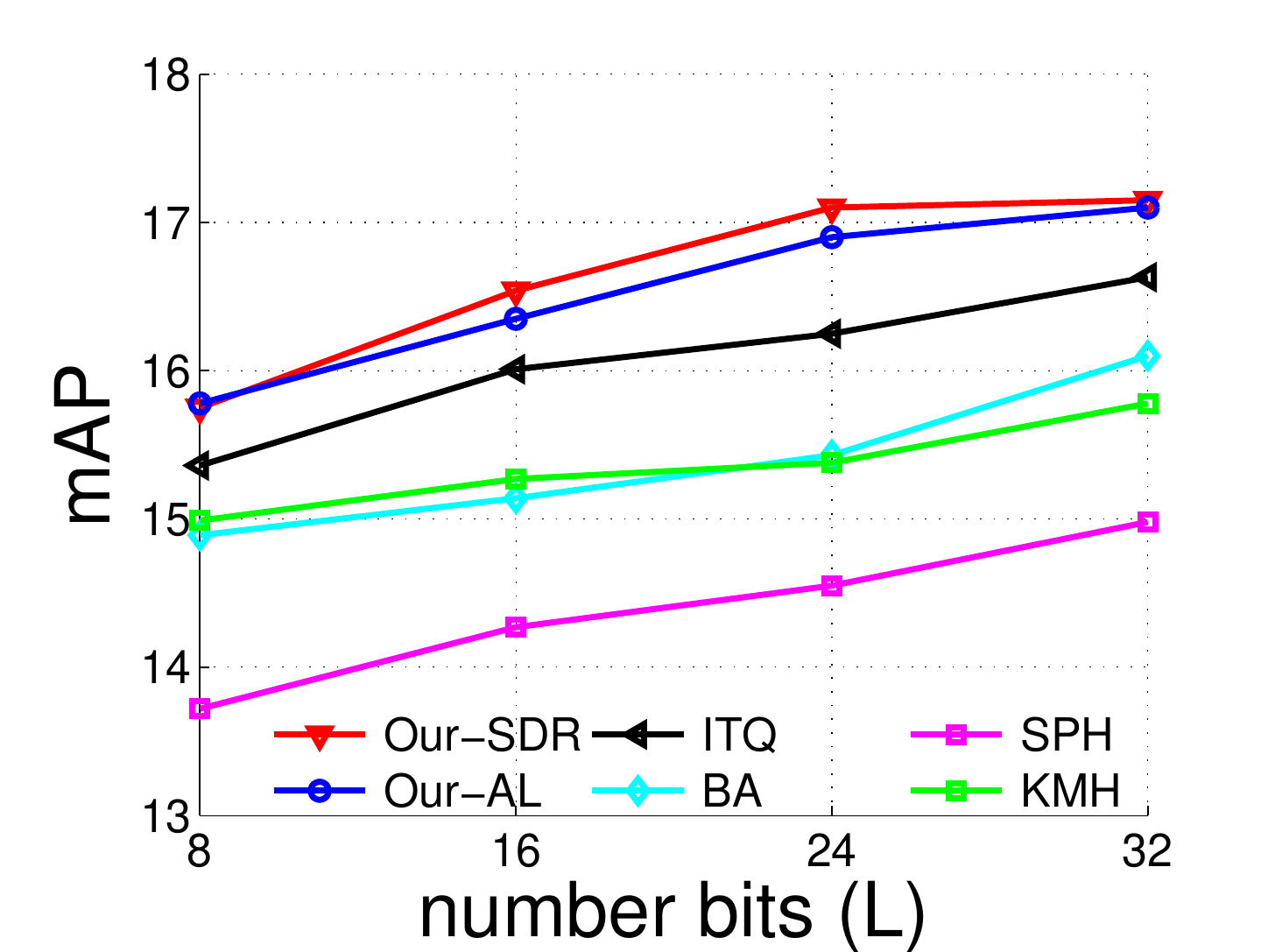}
       \label{fig:binary-infer-UNsup-cifar_mAP-soa}
}\hspace{-0.5cm}
\subfigure[MNIST]{
       \includegraphics[scale=0.28]{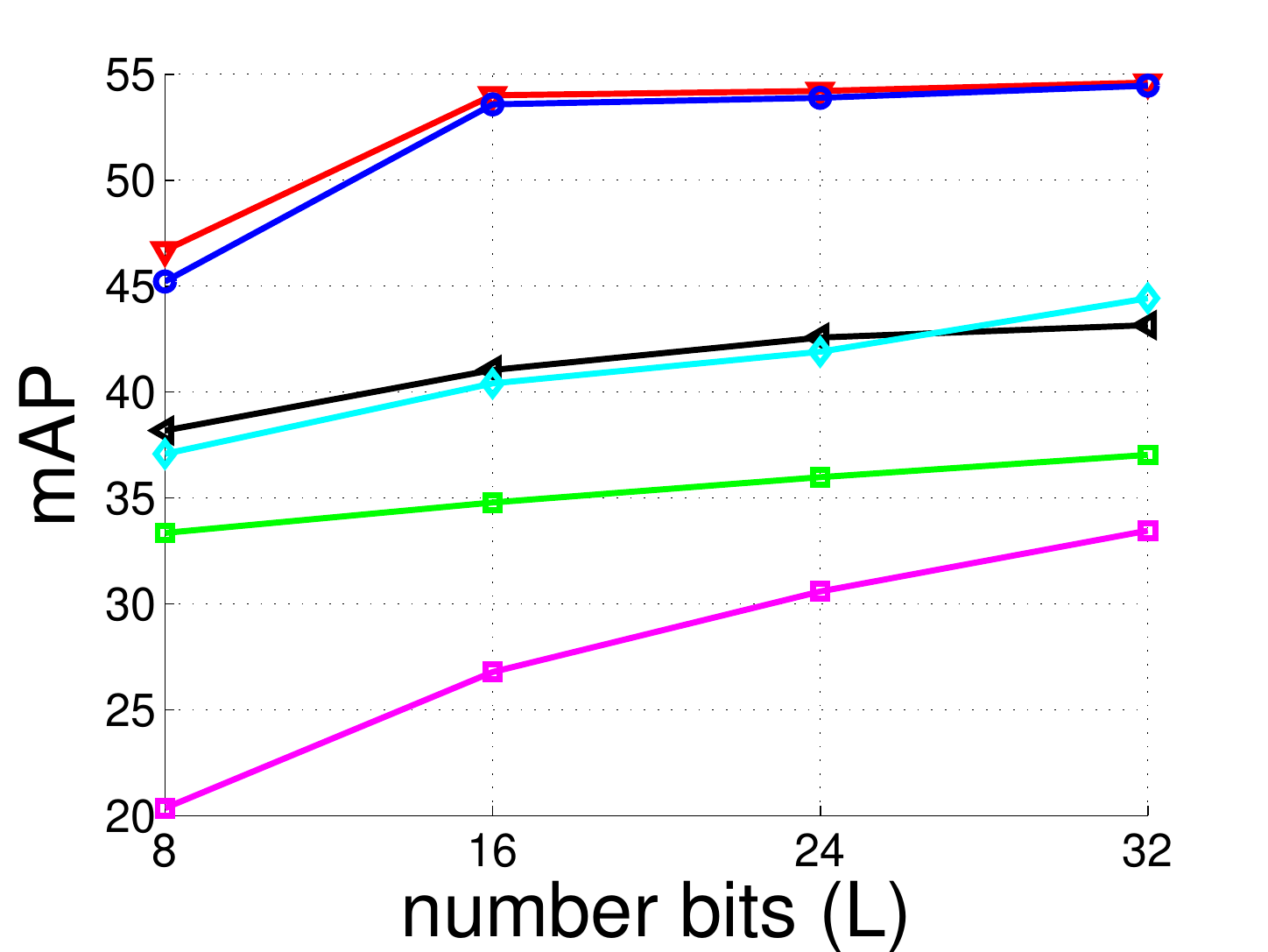} 
       \label{fig:binary-infer-UNsup-mnist_mAP-soa}
}\hspace{-0.5cm}
\subfigure[SUN397]{
       \includegraphics[scale=0.28]{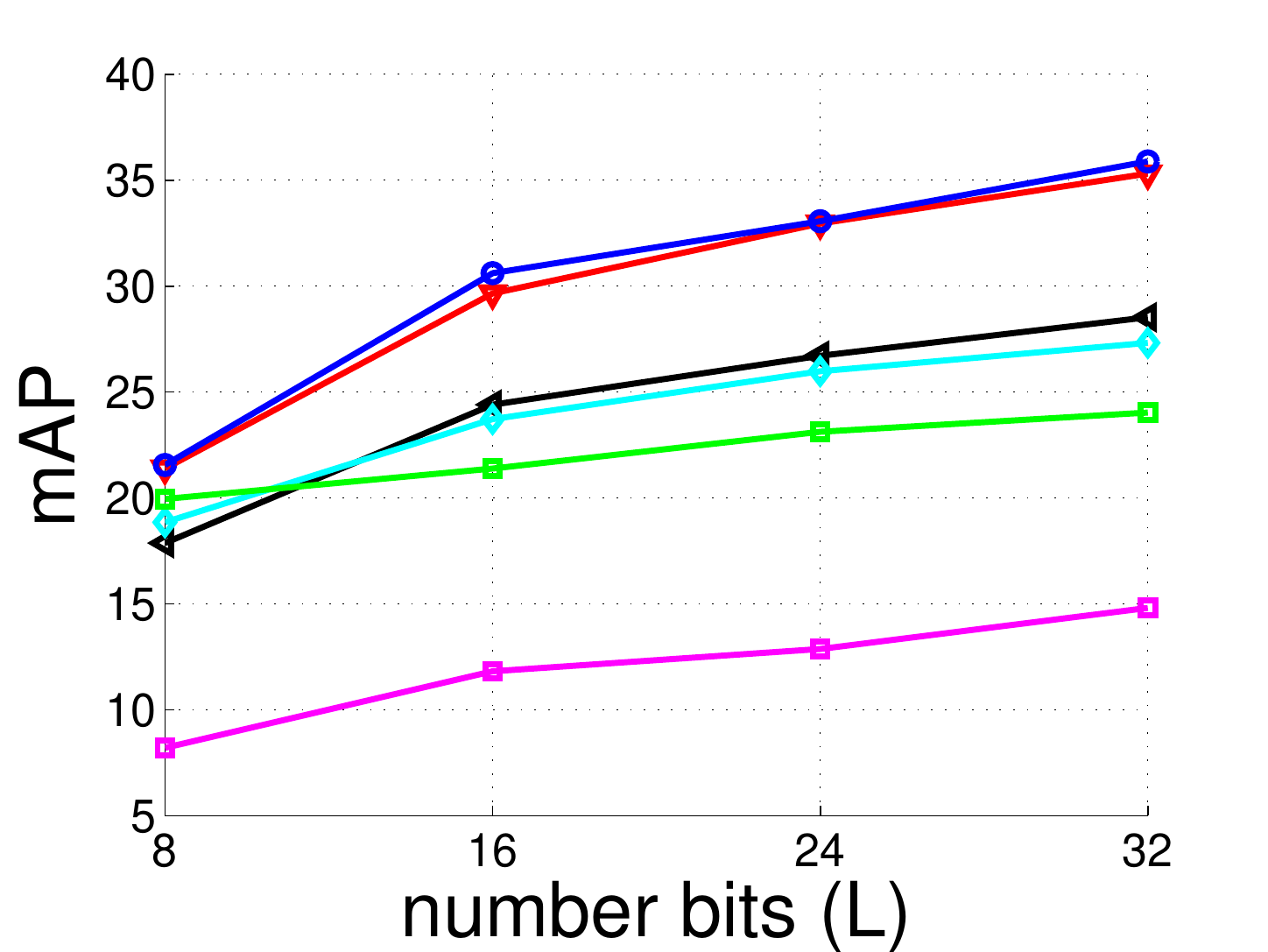}
       \label{fig:binary-infer-UNsup-sun397_mAP-soa}
}
\caption[]{mAP comparison with state-of-the-art unsupervised hashing methods.}
\label{fig:soa-UNsup-cifar10-mnist-sun397-mAP}
\vspace{-0.1cm}
\end{figure*}

\begin{table}[!t]
   \centering
   \footnotesize
   \caption{Precision at Hamming distance $r=2$ comparison with state-of-the-art unsupervised hashing methods on CIFAR10, MNIST, and SUN397.}
    \begin{tabular}{|l|c| c| c| c|c| c| c| c|c| c| c| c|}
		\hline
	  \multirow{2}{*}{} & \multicolumn{4}{|c|}{CIFAR10} & \multicolumn{4}{|c|}{MNIST} & \multicolumn{4}{|c|}{SUN397}\\
\cline{1-13}	$L$    &8 &16 &24 &32    &8 &16 &24 &32  &8 &16 &24 &32   \\ \hline 
Our-SDR	   						&17.19 &22.82 &27.40 &25.87   &43.08 &73.72 &81.34 &82.17
	&12.17 &32.15 &44.28 &45.38 \\ \hline
Our-AL	   						&17.34 &23.23 &27.26 &25.21   &42.09 &74.36 &81.50 &82.29 
    &11.99 &33.34 &44.13 &45.60 \\ \hline
ITQ\cite{DBLP:conf/cvpr/GongL11}&15.55 &22.49 &26.69 &15.36   &33.40 &69.96 &81.36 &74.70
	&9.75  &30.80 &42.07 &34.70 \\ \hline
BA\cite{BA_CVPR15}   			&15.62 &22.65 &26.55 &11.42   &32.62 &69.03 &79.11 &74.00
	&10.15 &31.61 &42.52 &31.97 \\ \hline
SPH\cite{CVPR12:SphericalHashing}&14.66 &20.32 &24.67 &12.32  &20.77 &51.74 &72.20 &63.38
	&6.38  &20.66 &30.10 &19.97 \\ \hline
KMH\cite{DBLP:conf/cvpr/HeWS13}	&15.11 &22.57 &27.25 &10.36   &32.45 &64.42 &79.97 &65.79 			&9.88  &31.04 &43.67 &28.85 \\ \hline
	  \end{tabular}
	  \label{tab:soa-UNsup-cifar10-mnist-sun397-pre}
	  \vspace{-0.1cm}
\end{table}

\begin{table}[!t]
   \vspace{-0.1em}
   \centering
   \caption{Classification accuracy on CIFAR-10 and MNIST. The results of NMF and ALM are cited from the corresponding paper~\cite{Mukherjee_2015_ICCV}.}
    \begin{tabular}{|l|c@{\hskip 0.1in}  c@{\hskip 0.1in} c@{\hskip 0.05in}|@{\hskip 0.05in} c@{\hskip 0.1in}  c@{\hskip 0.1in} c|}
		\hline
	  \multirow{2}{*}{} & \multicolumn{3}{|c|@{\hskip 0.05in}}{CIFAR10} & \multicolumn{3}{c|}{MNIST} \\
\cline{1-7}	$L$              &8     &16    &32     &8     &16     &32     \\ \hline 
Our-SDR	   					 &21.17 &24.90 &29.65  &60.68 &73.27 &81.24 \\\hline
Our-AL	   					 &21.00 &24.75 &28.84  &59.96 &72.67 &81.13 \\\hline
NMF\cite{Mukherjee_2015_ICCV}&19.77 &22.78 &23.59  &49.84 &69.65 &73.41 \\\hline
ALM\cite{Mukherjee_2015_ICCV}&19.41 &22.63 &24.27  &54.55 &69.46 &73.76 \\ \hline
	  \end{tabular}
	  \label{tab:knn}
	  \vspace{-0.4cm}
\end{table}
The mAP and precision@2 obtained by unsupervised hashing methods with varying code lengths are shown in Fig.~\ref{fig:soa-UNsup-cifar10-mnist-sun397-mAP} and Table~\ref{tab:soa-UNsup-cifar10-mnist-sun397-pre}, respectively. In term of mAP, Fig.~\ref{fig:soa-UNsup-cifar10-mnist-sun397-mAP} clearly shows that the proposed AL and SDR significantly outperform all compared methods. In term of precision@2, AL and SDR are comparable (e.g., $L=16,24$ on CIFAR10) or outperform compared methods. The improvements are more clear at high code length, i.e. $L=32$, on all datasets.
In comparison SDR and AL in unsupervised setting, two methods achieve very competitive results. 

\vspace{-0.3cm}
\paragraph{Comparison with Augmented Lagrangian Method (ALM)~\cite{Mukherjee_2015_ICCV} and Nonnegative Matrix Factorization (NMF)~\cite{Mukherjee_2015_ICCV}}: 
As the implementation of ALM~\cite{Mukherjee_2015_ICCV} and  NMF~\cite{Mukherjee_2015_ICCV} is not available, we set up the experiments on CIFAR10 and MNIST similar to~\cite{Mukherjee_2015_ICCV} to make a fair comparison. For each dataset, we randomly sample 2,000 images, 200 per class, as training set. Follow \cite{Mukherjee_2015_ICCV}, for CIFAR10, each image is represented by 625-$D$ HOG descriptors~\cite{DBLP:conf/cvpr/DalalT05}. The hash functions are defined as linear SVM. 
Similar to~\cite{Mukherjee_2015_ICCV}, we report the classification accuracy by using $k$-NN ($k=4$) classifier at varying code lengths. 
The comparative results, presented in Table~\ref{tab:knn}, clearly show that the proposed AL and SDR outperform ALM~\cite{Mukherjee_2015_ICCV} and NMF~\cite{Mukherjee_2015_ICCV} on both datasets. Although both our AL and ALM~\cite{Mukherjee_2015_ICCV} use augmented Lagrangian approach, the improvement of our AL over ALM~\cite{Mukherjee_2015_ICCV} confirms the benefit of the integration of binary constraint in the augmented Lagrangian function also the effectiveness of the proposed initialization.  

\vspace{-0.4cm}
\section{Conclusion}
\label{sec:concl}
\vspace{-0.3cm}
This paper proposes effective solutions to binary code inference step in two-step hashing where the goal is to preserve the original similarity matrix via Hamming distance in Hamming space. We cast the learning of one bit code as the binary quadratic problem.  We propose two approaches:  Semidefinite Relaxation (SDR) and Augmented Lagrangian (AL) for solving. Extensive experiments show that both AL and SDR approaches compare favorably with the state of the art.


\bibliographystyle{splncs}
\bibliography{hash}
\end{document}